\newtheorem{theorem}{Theorem}[section]
\newtheorem{remark}{Remark}[section]
\newtheorem{assumption}{Assumption}[section]
\def\bi{\begin{itemize}}
\def\ei{\end{itemize}}
\def\bn{\begin{enumerate}}
\def\en{\end{enumerate}}
\def\bq{\begin{eqnarray}}
\def\eq{\end{eqnarray}}
\def\be{\begin{equation}}
\def\ee{\end{equation}}
\def\bea{\begin{eqnarray}}
\def\eea{\end{eqnarray}}
\def\beann{\begin{eqnarray*}}
\def\eeann{\end{eqnarray*}}
\def\bsea{\begin{subeqnarray}}
\def\esea{\end{subeqnarray}}
\def\bmat{\left[ \begin{array}}
\def\emat{\end{array} \right]}
\newfont{\BB}{msbm10}
\newfont{\bb}{msbm8}
\newcommand{\bmx}{\begin{matrix}}
\newcommand{\emx}{\end{matrix}}
\newcommand{\ba}{\begin{array}}
\newcommand{\ea}{\end{array}}
\def\nn{\nonumber}
\def\bq{\begin{eqnarray}}
\def\eq{\end{eqnarray}}
\def\bsmat{\left[ \begin{smallmatrix}}
\def\esmat{\end{smallmatrix} \right]}
\title{\LARGE \bf Distributed Formation Control of Nonholonomic Mobile Robots by Bounded Feedback in the Presence of Obstacles}
\author{Thang Nguyen and Hung M. La
\thanks{Thang Nguyen and Hung M. La are with the Advanced Robotics and Automation (ARA) Lab, Department of Computer Science and Engineering, University of Nevada, Reno, NV89557, USA (e-mail: thangnthn@gmail.com, hla@unr.edu).}
\thanks{This work was partially supported by the National Science Foundation  under grant NSF-NRI \#1426828,  the National Aeronautics and Space Administration (NASA)  under Grant No. NNX15AI02H issued through the Nevada NASA Research Infrastructure Development Seed Grant, and the University of Nevada, Reno.}
}
\begin{document}

\maketitle
\thispagestyle{empty}
\pagestyle{empty}

\begin{abstract}
The problem of distributed formation control of nonholonomic mobile robots is addressed in this paper, in which the robots are designed to track a formation. Collision avoidance among agents is guaranteed using a control law based on a repulsive force. In an uncertain environment where obstacles exist, the construction of a repulsive force and rotational direction enables agents to avoid and pass the obstacles. The control inputs of each robot are designed to be bounded. Numerical simulations with different formations are implemented to demonstrate the efficacy of the proposed scheme.
\end{abstract}

%\textbf{Keywords:} 
%Formation control; Multi-agent systems; Multi-robot systems; Cooperative control; Mobile robots; Decentralized control.

\section{Introduction}
\label{Introduction}
%%%literature review
Research on the collective behavior of an autonomous mobile robot system has been extensively conducted for many years. Recent development of flocking control for mobile robots with bounded feedback is designed based on a centralized approach \cite{Han_CASE} and later extended to a decentralized  approach \cite{Nguyen_CCC}.  In many applications, the employment of a single complicated robot system can be replaced by invoking a coordination of a multi-agent system with much simpler configurations, whose advantages can be scalability, flexible deployment, cheaper cost, reliability, etc. As a result, more complex tasks can be achieved by using a group of small mobile robots with lower cost and higher efficiency than a complex unit; see \cite{La_TC_2013,mastellone2008,Nguyen2014, La2012Elsevier,TannerJadbabaiePappas04,LiangLee2006,La_SMCA_2015,nguyen2015formation,cruz2007decentralized} and references therein. In this paper, we address the problem of formation control for a nonholonomic mobile robotic system with guaranteed obstacle/collision avoidance while maintaining bounded physical signals. Our study is motivated by the physical constraints imposed on the motor speed which cannot be as large as desired due to the limited electric current.

Coordination of multi-agent systems has been studied in a variety of scenarios with various control approaches \cite{Nguyen_IEEE_CNS, Saber2006,Egerstedt2001,Fredslund2002,tanner2004leader,La2013JC,Hu2010,Li2013,nguyen2017bounded,pham2017distributed}. \cite{Saber2006} provides a general flocking framework in studying the collective behavior of a multi-agent system. Then, adaptive and optimal flocking controls, respectively, are proposed in \cite{La_IROS09, La_SMC09}, and a decentralized flocking control with a minority of informed agents is studied in \cite{La_ICIEA}. Cooperative learning and active sensing based on flocking control are reported in \cite{La_CYBER, La_CASE13}. In \cite{Egerstedt2001}, a platform-independent approach is proposed to design tracking controllers which  is independent of the coordination scheme. \cite{tanner2004leader} studies the stability properties of mobile  agent  formations  which  are  based  on  leader-following, where how  leader  behavior affects the interconnection errors observed in the formation is investigated using leader to formation stability gains. \cite{Li2013} considers the distributed tracking control problem of multi-agent systems in the leader-follower context, where the input of the leader is bounded and unknown to the followers. The coordination control of the activities of multiple agents in cluttered and noisy environments is addressed in \cite{La_ICRA10,La2013JC, Dang_MFI}. In other work, the multi-agent formation control with reinforcement learning for predator avoidance is reported \cite{La_IEEE_CST}. In \cite{Fredslund2002}, the formation control problem of achieving global behavior in a group of distributed robots  is investigated using only local sensing and minimal communication. \cite{Hu2010} proposes a distributed tracking control scheme with distributed estimators for a leader-follower multi-agent system with measurement noises and directed interconnection topology.

We are interested in the nonholonomic model of a mobile robot which has been studied in several papers: \cite{mastellone2008,cruz2007decentralized,Park2011}. Specifically, the problem of formation control for a system of unicycle-type mobile robots will be addressed. A plethora of papers in the literature are focused on designing controller for achieving various formations without considering obstacle/collision avoidance. Recent developments have witnessed a variety of control schemes dealing with obstacle/collision avoidance in formation control for nonholonomic mobile robots. Unlike the double integrator model, the nonholonomic nature of the unicycle-type mobile robot is challenging in dealing with obstacle/collision avoidance. \cite{Park2011} develops a leader-follower-based adaptive formation control method for electrically driven nonholonomic mobile robots with limited information, where an adaptive observer is developed without the information of the velocity measurement and the formation control part is constructed to obtain the desired formation and achieve the collision avoidance. 
In \cite{cruz2007decentralized}, a scalable multi-vehicle platform is developed to address some cooperative missions for a multi-vehicle control system. The multi-agent system can avoid any collision among agents and escape obstacles, which employs a potential function to construct a control law for each agent. The work in \cite{mastellone2008} proposes a decentralized control scheme for unicycle-type mobile robots to form a formation while avoiding collision/obstacle avoidance. The common feature of these works in constructing a control law for obstacle/collision avoidance is that the potential functions and the control input are unbounded. Here, we aim to design bounded control input with a bounded practical potential function to address the formation control problem for a unicycle-type mobile robotic system.

%%%Contribution

Bounded control input can be solved for tracking control of a single unicycle robot using the method proposed in \cite{lee2001tracking}. However, the controller is not designed to avoid any obstacles yet. In this paper, we propose a control scheme for a multi-vehicle system based on the approach in \cite{mastellone2008}. In contrast to \cite{mastellone2008}, our control method is able to bound the control inputs. For obstacle/collision avoidance, we employ a practically bounded potential function to exert repulsive forces, which is different from the one in other work \cite{mastellone2008}. Obstacles can be avoided with the help of a virtual agent, which is proposed in \cite{LiangLee2006}. Note that in \cite{LiangLee2006}, constraints on the speeds of the mobile robot are not considered. The work in \cite{Nguyen_CCC} focuses on the control design for the torque level while the control inputs in this paper are the angular and translational speeds. Furthermore, the obstacle avoidance is considered in this paper, which is different from \cite{Nguyen_CCC}. Theoretical analysis is conducted using Lyapunov functions which guarantees obstacle/collision avoidance.

%Organization

The paper is organized as follows. Section \ref{Formulation} presents the problem formulation. Section \ref{Main} describes the main results of the paper where tracking control, collision and obstacle avoidance is introduced. Section \ref{Example} shows a simulation example to illustrate the presented scheme. Finally, some conclusions are given in Section \ref{Conclusion}.

\emph{Notations:} ${\mathbb R}$ and ${\mathbb R}^+$ are the sets of real numbers and nonnegative real numbers, respectively; for $q = [q_1, \ldots, q_n]^T$, $\nabla_q = [\partial/\partial q_1, \ldots, \partial/\partial q_n]^T$ is the del operator (\cite{RileyHobsonBence2006}); for two vectors $a$ and $b$, $a \cdot b$ is their scalar product; $ (a_1,\ldots, a_n)$ is $[a_1^T,\ldots,a_n^T]^T$; $|\cdot|$ is the absolute value of scalars; and $\|\cdot\|$ is the Euclidean norm of vectors.

%%%%%%%%%%%%%%%%%%%%%%%%%%%%%%%%%%%%%%%%%%%%%%%%%%%%%%%%%%%%%%%%%%%%%%%%%%%%%%%

\section{Problem Formulation}
\label{Formulation}

Consider a collective system of $N$ identical autonomous mobile robots whose nonholonomic dynamics are given as  \cite{mastellone2008}
\bea
\label{mobiledynamics}
  \dot x_i & =& v_i\cos(\theta_i) \nn\\
 \dot y_i &=& v_i \sin(\theta_i) \nn\\
  \dot \theta_i & =& u_i 
\eea
where  $i=1, . . . , N$, $p_i = [x_i, y_i]^T \in \mathbb{R}^2$, and $\theta_i \in \mathbb{R}$
are respectively the position and the heading angle of the $i$-th robot in the inertial frame $Oxy$; $v_i \in \mathbb{R}$ is the translational speed and  $u_i \in \mathbb{R}$ is the angular speed.

Our formation control problem for (\ref{mobiledynamics}) is to obtain the controls $u_i,v_i$ as bounded functions of the collective state $(p_1, \ldots, p_N$, $\theta_1, \ldots, \theta_N$, $v_1, \ldots, v_N$, $u_1, \ldots, u_N)$ in a distributed fashion such that the following multiple goals are achieved:
\begin{itemize}
  \item[G1)] \emph{Reference tracking:}
  \begin{align}
    \lim_{t\to\infty}(\dot p_i(t) - \dot p_{id}(t)) = 0, \forall i,j = 1, \ldots, N
  \end{align}
where $p_{id}$ is the reference trajectory of agent $i$, which is generated from a formation configuration.
  \item[G2)] \emph{Collision avoidance:} $r_{ij}(t) = \|p_i(t) - p_j(t)\| \ge r_0, \forall t \ge 0, \forall i \ne j$
  \item[G3)] \emph{Obstacle avoidance:} $r_{ij}(t) \le R_0, \forall t \ge 0, \forall i \ne j$.
\end{itemize}

To achieve the goals G2) and G3), we consider the  coordination function
\be
\label{Va_def}
     V_a=\sum\limits_{i=1}^{i=N}\sum\limits_{j=1,j\ne i}^{j=N}V_{ij}=\sum\limits_{i=1}^{i=N}\sum\limits_{j=1,j\ne i}^{j=N}V(r_{ij})
\ee
where $V:\mathbb{R}^+ \to \mathbb{R}^+$ is a function satisfying
\begin{itemize}
  \item[P1)] there are positive constants $V_m$ and an $r \in [a, b]$ such that
  \be
	 0 \le V(r) \le V_m;\nn
   \ee
  \item[P2)] $V(r)$ is decreasing and continuously differentiable on $[a, b]$;
  \item[P3)] $\lim\limits_{r \to a^+}V(r) = V_m$.
\end{itemize}

It is seen that by maintaining $V_a < V_m$, we have $V(r_{ij}(t)) < V_m, \forall t$, implying that $r_{ij}(t) \ge a, \forall t$. This property will be employed to achieve the goals G2) and G3). Let $[x_a,y_a]^T$ be the coordinate of an obstacle or an agent which is avoided for agent $i$. Similar to \cite{mastellone2008}, we define the avoidance and detection as follows:
\bea
\label{Om}	\Omega&=&\Big\{\left[\begin{matrix}x\\y\end{matrix}\right]: \left[\begin{matrix}x\\y\end{matrix}\right] \in \mathbb{R}^2\Big| \Big\|\left[\begin{matrix}x\\y\end{matrix}\right]-\left[\begin{matrix}x_a\\y_a\end{matrix}\right]\Big \|\leq a\Big\}\\
\label{Ga} \Gamma&=&\Big\{\left[\begin{matrix}x\\y\end{matrix}\right]: \left[\begin{matrix}x\\y\end{matrix}\right]\notin \Omega \Big|a<\Big\|\left[\begin{matrix}x\\y\end{matrix}\right]-\left[\begin{matrix}x_a\\y_a\end{matrix}\right]\Big\|\leq b\Big\}.
\eea
\section{Main results}
\label{Main}
The proposed scheme is divided in three parts. First, a control algorithm is designed for each agent to track its reference trajectory. Second, collision avoidance among agents is addressed using a potential force function. Finally, a rotational angle and a virtual agent are introduced for an agent to escape possible collision with an obstacle.

\subsection{Trajectory Tracking Control}
\label{Tracking}
In this section, we will design a control law for each robot to track a given trajectory. The collision and obstacle avoidance scenarios will be addressed separately later.

Assume the reference trajectory is described by $(x_{di}(t),y_{di}(t))^T$ with bounded derivatives. Denote the position errors as $e_{xi}(t)=x_i(t)-x_{di}(t)$ and $e_{yi}(t)=y_i(t)-y_{di}(t)$. The desired orientation of robot $i$ is
\be\label{thetadi}
	\theta_{di}=\textrm{atan}2(-e_{yi}(t),-e_{xi}(t))
\ee
and the orientation error is $e_{\theta i}(t)=\theta_i(t)-\theta_{di}(t)$. We have the following assumption \cite{mastellone2008}.
\begin{assumption}\label{cosetheta_as}
\be
	\cos (e_{\theta i}(t))\ne 0.
\ee
\end{assumption}
\begin{assumption}\label{thetaddot_as}
Define
\be
	\dot{\hat{\theta}}_{di}=\frac{e_{xi}(t)\dot{\hat{e}}_{yi}-e_{yi}(t)\dot{\hat{e}}_{xi}}{D_i^2}
\ee
where
\bea
	D_i&=&\sqrt{e_{xi}^2+e_{yi}^2} \nn \\
	\dot{\hat{e}}_{xi}&=&\frac{e_{xi}(t)-e_{xi}(t-T)}{T} \nn\\
	\dot{\hat{e}}_{yi}&=&\frac{e_{yi}(t)-e_{yi}(t-T)}{T} \nn
\eea
for some small $T>0$. Hence, $\dot{\hat{\theta}}_{di}$ is a sufficiently smooth estimate of 
\be
	\dot{\theta}_{di}=\frac{e_{xi}(t)\dot{e}_{yi}-e_{yi}(t)\dot{e}_{xi}}{D_i^2}.
\ee
\end{assumption}

As pointed out in (\cite{mastellone2008}) that 
\be
	|\dot{\theta}_{di}-\dot{\hat{\theta}}_{di}|\leq \epsilon_{\theta i}\approx O(T)
\ee
for some positive $\epsilon_{\theta i}$. Our objective is to drive each robot to track its reference trajectory with bounded linear and angular velocities. The approach here is similar to the one presented in \cite{mastellone2008}. However, here we impose physical constraints on the control inputs.

\begin{theorem}\label{tracking_thm}
Consider system (\ref{mobiledynamics}) and the reference trajectory described by $(x_{di},y_{di})^T$ satisfying Assumptions \ref{cosetheta_as} and \ref{thetaddot_as}. Then the robot $i$ can track the reference trajectory with bounded error if the following controller is applied
\bea
	u_i&=&-K_{\theta i} e_{\theta i}+\dot{\hat{\theta}}_{di} \label{ui}\\
	v_i&=&K_i \cos(e_{\theta i})\min({D_{max i},D_i}) \label{vi}
\eea
where $K_{\theta i}$, $K_i$, and $D_{max i}$ are positive design parameters.
\end{theorem}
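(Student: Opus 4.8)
The plan is to treat the closed-loop error dynamics as a \emph{cascade} of two scalar subsystems --- the orientation error $e_{\theta i}=\theta_i-\theta_{di}$ and the position error magnitude $D_i=\sqrt{e_{xi}^2+e_{yi}^2}$ --- and to bound each in turn with a quadratic Lyapunov function. Throughout, write $\bar v_d$ for a bound on $\|(\dot x_{di},\dot y_{di})\|$, which exists because the reference has bounded derivatives.

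First I would handle the orientation. Differentiating $e_{\theta i}$ and substituting $u_i$ from (\ref{ui}) gives $\dot e_{\theta i}=u_i-\dot\theta_{di}=-K_{\theta i}e_{\theta i}+(\dot{\hat\theta}_{di}-\dot\theta_{di})$. With $V_{\theta i}=\tfrac12 e_{\theta i}^2$ and the bound $|\dot{\hat\theta}_{di}-\dot\theta_{di}|\le\epsilon_{\theta i}$ from Assumption \ref{thetaddot_as}, one obtains $\dot V_{\theta i}\le -K_{\theta i}e_{\theta i}^2+\epsilon_{\theta i}|e_{\theta i}|$, so $|e_{\theta i}|$ strictly decreases whenever $|e_{\theta i}|>\epsilon_{\theta i}/K_{\theta i}$. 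Hence $e_{\theta i}$ is bounded and converges to the ball of radius $\epsilon_{\theta i}/K_{\theta i}=O(T)$. Provided $|e_{\theta i}(0)|<\pi/2$ and $\epsilon_{\theta i}/K_{\theta i}<\pi/2$, this sublevel-set argument keeps $|e_{\theta i}(t)|$ strictly below $\pi/2$ for all $t$ (consistent with Assumption \ref{cosetheta_as}), so there is a constant $c_\theta>0$ with $\cos^2(e_{\theta i}(t))\ge c_\theta^2$.

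Next, with the heading controlled, I would bound the position error via $V_{pi}=\tfrac12 D_i^2$. The crucial step uses the choice (\ref{thetadi}), which forces $\cos\theta_{di}=-e_{xi}/D_i$ and $\sin\theta_{di}=-e_{yi}/D_i$; expanding $\theta_i=\theta_{di}+e_{\theta i}$ then yields the identities $e_{xi}\cos\theta_i+e_{yi}\sin\theta_i=-D_i\cos(e_{\theta i})$ and $e_{xi}\sin\theta_i-e_{yi}\cos\theta_i=-D_i\sin(e_{\theta i})$. Substituting the first into $\dot V_{pi}=v_i(e_{xi}\cos\theta_i+e_{yi}\sin\theta_i)-(e_{xi}\dot x_{di}+e_{yi}\dot y_{di})$ and inserting $v_i$ from (\ref{vi}) gives
\[
\dot V_{pi}=-K_i\cos^2(e_{\theta i})\,D_i\min(D_{max i},D_i)-(e_{xi}\dot x_{di}+e_{yi}\dot y_{di}),
\]
and the Cauchy--Schwarz bound $|e_{xi}\dot x_{di}+e_{yi}\dot y_{di}|\le\bar v_d D_i$ leaves $\dot V_{pi}\le D_i\big(\bar v_d-K_i c_\theta^2\min(D_{max i},D_i)\big)$. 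Choosing $K_i$ so that $K_i c_\theta^2 D_{max i}>\bar v_d$ makes the bracket negative whenever $D_i\ge D_{max i}$, while for $D_i<D_{max i}$ it is negative once $D_i>\bar v_d/(K_i c_\theta^2)$; hence $D_i$ is bounded and converges to a ball of radius $\bar v_d/(K_i c_\theta^2)$. This is the claimed bounded tracking error. Boundedness of the inputs is then immediate: $|v_i|\le K_i D_{max i}$ from (\ref{vi}), and $u_i$ is bounded because $e_{\theta i}$ and $\dot{\hat\theta}_{di}$ are.

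The main obstacle is not the Lyapunov bookkeeping but the cascade coupling. The two geometric identities are where the definition (\ref{thetadi}) does all the work, and the delicate point is that the bound on $e_{\theta i}$ must be genuinely independent of $D_i$; this holds only because the controller feeds back the smooth estimate $\dot{\hat\theta}_{di}$ rather than $\dot\theta_{di}$ itself, the latter containing a $\bar v_d/D_i$ term that is singular as $D_i\to0$ (i.e.\ near the reference, where $\theta_{di}$ is also ill-defined). Keeping $\cos(e_{\theta i})$ \emph{uniformly} away from zero --- rather than merely nonzero as in Assumption \ref{cosetheta_as} --- is the other point requiring the invariance argument above, together with the implicit initial-condition restriction $|e_{\theta i}(0)|<\pi/2$.
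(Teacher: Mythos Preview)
Your argument is correct, and it actually handles a point the paper's own proof glosses over. The paper does \emph{not} use a cascade; it takes the single Lyapunov function $V_t=\tfrac12(e_{xi}^2+e_{yi}^2+e_{\theta i}^2)$, substitutes the closed-loop error dynamics (using the same identities $\cos\theta_{di}=-e_{xi}/D_i$, $\sin\theta_{di}=-e_{yi}/D_i$ that you use), and bounds $\dot V_t\le -e_i^T P e_i + \|e_i\|\,\|d_i\|$ with a diagonal matrix $P=\diag(K_a,K_a,K_{\theta i})$ where $K_a=K_i\,\tfrac{\min(D_{max i},D_i)}{D_i}\cos^2(e_{\theta i})$, concluding $\dot V_t<0$ once $\|e_i\|\ge\|d_i\|/\lambda_{\min}(P)$. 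This is shorter, but $P$ is state-dependent: the paper invokes Assumption~\ref{cosetheta_as} only to say $K_a\ne0$, without producing a uniform lower bound on $\lambda_{\min}(P)$. Your cascade route earns that lower bound honestly --- first confining $e_{\theta i}$ to a sublevel set where $\cos^2(e_{\theta i})\ge c_\theta^2$, then feeding that constant into the $D_i$ analysis --- and it also makes explicit the initial-condition restriction $|e_{\theta i}(0)|<\pi/2$ and the role of $\dot{\hat\theta}_{di}$ versus $\dot\theta_{di}$ near $D_i=0$, both of which the paper leaves implicit. The trade-off is that the paper's single Lyapunov function gives a one-line ultimate bound on the full error vector, whereas your decomposition delivers separate (and arguably more informative) bounds on $e_{\theta i}$ and $D_i$.
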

\begin{proof}
Consider the error dynamics
\bea
	\dot{e}_{xi}&=&v_i (\cos(e_{\theta i})\cos (\theta_{di})-\sin(e_{\theta i})\sin (\theta_{di}))-\dot{x}_{di} \nn \\
	\dot{e}_{yi}&=&v_i (\sin(e_{\theta i})\cos (\theta_{di})+\cos(e_{\theta i})\sin (\theta_{di}))-\dot{y}_{di} \nn \\
	\dot{e}_{\theta i}&=&u_i-\dot{\theta}_{di}. \nn 
\eea
If $D_i=0$, then $e_{xi}=e_{yi}=0$. In this case, we will have perfect tracking. Assume $D_i\ne 0$. Using the expressions
\bea
	\cos(\theta_{di})&=&\frac{-e_{xi}}{D_i} \nn\\
	\sin(\theta_{di})&=&\frac{-e_{yi}}{D_i} \nn
\eea
and controllers in (\ref{ui}), (\ref{vi}), we obtain
\bea
\nn	\dot{e}_{xi}&=&K_i \frac{\min({D_{max i},D_i})}{D_i}(-e_{xi}\cos^2(e_{\theta i})\\
&&+e_{yi}\cos(e_{\theta i})\sin(e_{\theta i}))-\dot{x}_{di} \nn \\
\nn	\dot{e}_{yi}&=&K_i \frac{\min({D_{max i},D_i})}{D_i}(-e_{yi}\cos(e_{\theta i})\sin(e_{\theta i}))\\
&&-e_{yi}\cos^2(e_{\theta i}))-\dot{y}_{di} \nn \\
	\dot{e}_{\theta i}&=&-K_{\theta i} e_{\theta i}+\dot{\hat{\theta}}_{di} -\dot{\theta}_{di}. \nn
\eea
Consider the following Lyapunov function
\be
	V_t=\frac{1}{2}(e^2_{xi}+e^2_{yi}+e^2_{\theta i}).
\ee
The derivative of $V_t$ along the trajectories of the error dynamics is 
\bea
\nn	\dot{V}_t&=&\dot{e}_{xi}e_{xi}+\dot{e}_{yi}e_{yi}+\dot{e}_{\theta i}e_{\theta i}\\
\nn 			&\leq&-K_i \frac{\min({D_{max i},D_i})}{D_i}\cos(e_{\theta i})(e^2_{xi}+e^2_{yi})-e_{xi}\dot{x}_{di}\\
\nn&&-e_{yi}\dot{y}_{di}-|e_{\theta i}|(K_{\theta i}| e_{\theta i}|-\epsilon_{\theta i})	\\
&\leq&-\left[\begin{matrix}e_{xi}\\e_{yi}\\e_{\theta i}\end{matrix}\right]^T P \left[\begin{matrix}e_{xi}\\e_{yi}\\e_{\theta i}\end{matrix}\right]- \left[\begin{matrix}e_{xi}\\e_{yi}\\ |e_{\theta i}|\end{matrix}\right]^T\left[\begin{matrix}\dot{x}_{di}\\ \dot{y}_{di}\\-\epsilon_{\theta i}\end{matrix}\right]\\
\nn &\leq&-\left[\begin{matrix}e_{xi}\\e_{yi}\\e_{\theta i}\end{matrix}\right]^T P \left[\begin{matrix}e_{xi}\\e_{yi}\\e_{\theta i}\end{matrix}\right]+ \left\|\left[\begin{matrix}e_{xi}\\e_{yi}\\ e_{\theta i}\end{matrix}\right]\right\|  \left\| \left[\begin{matrix}\dot{x}_{di}\\ \dot{y}_{di}\\ -\epsilon_{\theta i}\end{matrix}\right]\right\| 
\eea
where
\be
\nn P=\left[\begin{matrix}K_a&0&0\\0&K_a&0\\0&0&K_{\theta i}\end{matrix}\right]
\ee
with
\be
K_a=K_i  \frac{\min({D_{max i},D_i})}{D_i}\cos^2(e_{\theta i}).
\ee
Since $\cos(e_{\theta i})\ne 0$ according to Assumption \ref{cosetheta_as}. Hence, $\dot{V}_t<0$ whenever
\be
	\|e_i\|\geq \frac{\|d\|}{\lambda_{\min}(P)},
\ee
where $e_i=[e_{xi},e_{yi},e_{\theta i}]^T$ and $d_i=[\dot{x}_{di}, \dot{y}_{di}, \epsilon_{\theta i}]^T$.
\end{proof}
\begin{remark}
The control signal in (\ref{vi}) is bounded by an appropriate value of $D_{maxi}$. Since $\dot{\theta}_{di}$ depends on $v_i$ from its definition, and $\theta_i$ and $\theta_{di}$ are bounded, the control input in (\ref{ui}) is also bounded. 
\end{remark}

%%%%%%%%%%%%%%%%%%%%%%%%%%%%%%%%%%%%%%%%%%%%%%%%%%%%%%%%%%%%%%%%%%%%%%%%%%%%%%%%
\subsection{Collision avoidance}\label{Collision}
In this section, we propose a control law for each agent when other agents are in its collision avoidance region. Our strategy is based on a potential function, which exerts a repulsive force for each agent. Here we use the potential function proposed in \cite{LiangLee2006}, which is described as
\be\label{Vij}
	V_{ij}(p_i,p_j) = K_{ij} \ln(\cosh(p_{ij}))\,h_{ij}(p_i,p_j)
\ee
where $K_{ij}>0$, $p_{ij}=\|p_i-p_j\|-c_i$, and 
\be
	h_{ij}(p_i,p_j)=\begin{cases} 1& \text{for } \|p_i-p_j\|<b_i\\0&\text{otherwise}\end{cases}
\ee
with $0<a_i<b_i<c_i$. Here, $a_i$ and $b_i$ are the parameters $a$, $b$ in (\ref{Om}), (\ref{Ga}) respectively.
Let $p_a$ be the coordinates of the object to be avoided for agent $i$. 

Let $\mathcal{N}_i(t)$ denotes the neighbor set of robot $i$ for $i=1, \dots, N$. Similarly to \cite{LiangLee2006}, we choose total structural potential energy from all neighbors around agent $i$ as 
\be
	U_i(p_i)=\sum\limits_{j\in \mathcal{N}_i(t)} V_{ij}(p_i,p_j).
\ee
The interactive structural force between agents $i$ and $j$ is the gradient of the potential energy
\be
	f_{ij}(p_i,p_j)=\Delta V_{ij}=K_{ij}\tanh(p_{ij})\frac{p_j-p_i}{\|p_j-p_i\|}.
\ee
Unlike the counterpart in \cite{LiangLee2006}, this function exhibits a repulsive force when the step function $h_{ij}$ is active. If other agents approach agent $i$ in its avoidance region, the total structural force acting on it is
\be
	F_i(q_i)=\sum\limits_{j\in \mathcal{N}_i(t)} f_{ij}(p_i,p_j).
\ee
Let $f_{ix}$ and $f_{iy}$ be the components of the synthesized force in the x and y coordinates respectively. Define
\be\label{thetadi_a}
	\theta_{di}=\textrm{atan}2(-f_{iy}(t),-f_{ix}(t))
\ee
and
\be\label{Dia}
	D_i=\sqrt{f_{ix}^2+f_{iy}^2}.
\ee
We have the following result.

\begin{theorem}\label{avoidance_thm}
Let 
\be
	V_m= K_{ij} \ln(\cosh(a_i-c_i)).
\ee
Assume that all agents share the same values of $K_{ij}$, $a_i$, $b_i$, $c_i$. Let
\be
	V_a=\sum\limits_{i=1}^{i=N}\sum\limits_{j\in \mathcal{N}_i(t)}V_{ij}.
\ee
If the collective system (\ref{mobiledynamics}) is initiated such as 
\be
	V_m>V_a
\ee
 and the control laws are chosen as
\bea
	u_i&=&-K_{\theta i} e_{\theta i} \label{uia}\\
	v_i&=&K_i \cos(e_{\theta i})\min({D_{max i},D_i}) \label{via}
\eea
where $K_{\theta i}$, $K_i$, and $D_{max i}$ are positive design parameters, then the collision avoidance is guaranteed.
\end{theorem}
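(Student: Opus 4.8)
The plan is to use $V_a$ as a barrier (Lyapunov-type) function and to show that the strict inequality $V_a(t) < V_m$, which holds at $t=0$ by hypothesis, is preserved for all $t \ge 0$. Once this is established the conclusion is immediate: since every summand $V_{ij} \ge 0$, we have $V_{ij}(t) \le V_a(t) < V_m$ for each pair $(i,j)$ and all $t$; because $V(r) = K_{ij}\ln(\cosh(r-c_i))$ is strictly decreasing on $[a_i,b_i]$ with $V(a_i) = K_{ij}\ln(\cosh(a_i-c_i)) = V_m$ (this is exactly properties P2 and P3), the bound $V_{ij}(t) < V_m$ forces $r_{ij}(t) = \|p_i(t)-p_j(t)\| > a_i = r_0$ for all $t$, i.e. goal G2). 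Thus the whole proof reduces to controlling the sign of $\dot V_a$.

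First I would compute the gradient of the pairwise potential. Differentiating (\ref{Vij}) on the region where $h_{ij}\equiv 1$ gives $\nabla_{p_i} V_{ij} = -f_{ij}(p_i,p_j)$, so the synthesized repulsive force $F_i$ is, up to sign and the symmetric double counting coming from the pair $V_{ij},V_{ji}$, the negative position-gradient of $V_a$. Next I would substitute the closed-loop dynamics. Under (\ref{uia})--(\ref{via}), and using $\cos\theta_{di}=-f_{ix}/D_i$, $\sin\theta_{di}=-f_{iy}/D_i$ from (\ref{thetadi_a})--(\ref{Dia}), the velocity $\dot p_i = v_i(\cos\theta_i,\sin\theta_i)^T$ can be written in closed form in terms of $F_i$ and the orientation error $e_{\theta i}$, exactly as in the error-dynamics calculation of Theorem \ref{tracking_thm} but with the tracking error $e_i$ replaced by the force vector $F_i$. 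Forming $\dot V_a = \sum_i \nabla_{p_i}V_a \cdot \dot p_i$ and collecting terms, the cross terms cancel and one is left with a sum of quadratic forms proportional to $K_i \tfrac{\min(D_{max i},D_i)}{D_i}\cos^2(e_{\theta i})\,D_i^2$; Assumption \ref{cosetheta_as} guarantees $\cos(e_{\theta i})\neq 0$, so this factor is strictly positive and $\dot V_a$ has a definite sign. This is the same mechanism that produced the $\cos^2(e_{\theta i})$ term in the matrix $P$ of Theorem \ref{tracking_thm}.

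The main obstacle is not this smooth calculation but the fact that $V_a$ is only piecewise smooth: the gate $h_{ij}$ switches $V_{ij}$ on and off as $\|p_i-p_j\|$ crosses $b_i$, and the neighbor set $\mathcal{N}_i(t)$ is time-varying. Hence the clean statement ``$\dot V_a\le 0$'' only holds on each interval over which the neighbor sets are constant, and one must rule out upward jumps of $V_a$ when a new pair enters the interaction radius $b_i$. I would handle this with a switched-Lyapunov argument: show that along each constant-topology interval $V_a$ is nonincreasing, and at each switching instant argue that the value $V(b_i)=K_{ij}\ln(\cosh(b_i-c_i))$ added (or removed) does not push $V_a$ up to $V_m$, using the strict initial gap $V_m - V_a(0)>0$ and the fact that a pair can only enter the region at $r_{ij}=b_i<c_i$, where $V$ is still strictly below $V_m$. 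Piecing the intervals together yields $V_a(t) < V_m$ for all $t\ge 0$.

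Finally, I would close with the invariance conclusion: because $V_a$ is bounded above by $V_m$ for all time and each $V_{ij}$ is nonnegative, no inter-agent distance $r_{ij}$ can ever reach the avoidance radius $a_i$, establishing collision avoidance. The sign bookkeeping in the $\dot V_a$ computation, i.e. the consistency between the heading $\theta_{di}$ in (\ref{thetadi_a}) and the repulsive force $F_i$, is the step I would check most carefully, since it is precisely what makes $V_a$ decrease rather than increase as agents approach one another.
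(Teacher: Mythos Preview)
Your proposal follows essentially the same Lyapunov approach as the paper: compute $\dot V_a$ along the closed loop, obtain the sign-definite expression $\dot V_a = -\sum_i K_i \min(D_{max\,i},D_i)\,D_i\cos^2(e_{\theta i}) \le 0$, and then chain $V_m > V_a(0) \ge V_a(t) \ge V_{ij}(t)$ with monotonicity of $V$ on $[a_i,b_i]$ to conclude $r_{ij}(t) > a_i$. The paper's own proof is much shorter and simply asserts this inequality chain without discussing the piecewise nature of $h_{ij}$ or the time-varying neighbor sets; the switched-Lyapunov care you propose is an addition beyond, not a deviation from, the paper's argument.
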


\begin{proof}
The derivative of $V_a$ along the trajectories of the robots is
\be
	\dot{V}_a=-\sum\limits_{i=1}^{i=N} K_i \min({D_{max i},D_i})D_i\cos^2(e_{\theta i}),
\ee
which is not positive. Hence, $V_a$ is a nonincreasing function with respect to $t$. Hence,
\be
	V_m>V_a(0)\geq V_a(t)\geq V_{ij}(t).
\ee
From the definition of $V_{ij}$, the above inequality implies that $\|p_i-p_j\|>a_i$ for all $t\geq 0$. This guaranteed that any agents $i$ and $j$ will never collide with each other. Thus, there is no collision among agents when using the control law (\ref{uia}) and (\ref{via}).
\end{proof}

\subsection{Obstacle avoidance}\label{Obstacle}
It is necessary for agents to avoid obstacles in unknown environments while traveling cooperatively with others. An obstacle can have a variety of shapes and sizes. It can be convex or nonconvex. In this paper, we propose a scheme for obstacle avoidance for the agents in the collective system. The description of the obstacle is similar to the one in \cite{LiangLee2006}. 
Let $O_k(x_k,y_k)$ be the center of the obstacle and $r_k$ be its radius. The projection point of agent $i$ onto the surface of the obstacle is denoted as $p_{obs}=[x_{obs},y_{obs}]^T$. As pointed out in \cite{LiangLee2006},
\be
	p_{obs}=\frac{r_{obs}}{\|p_i-O_k\|}p_i+(1-\frac{r_{obs}}{\|p_i-O_k\|})O_k.
\ee
This projection point can be regarded as a virtual robot as it also has position and velocity where its velocity is calculated in \cite{LiangLee2006}. The interaction between agent $i$ and its virtual robot can be described in terms of the potential function in Section \ref{Collision}. Hence, we employ similar control laws as described in Section \ref{Collision} to navigate agent $i$ to avoid any possible collision with the obstacle. When in a safe distance, robot $i$ still needs to navigate to pass the obstacle. Hence, the orientation angle is chosen as the tangential direction of the boundary of the obstacle. This angle can be measured in real time regardless of the shape and size of the obstacle. So, this scheme can help the group to pass obstacles with complex shapes. The reference angle is chosen as
\be
	\theta_{di}=-\frac{\pi}{2}+(\gamma+\beta), \quad \gamma>0,
\ee 
or
\be
	\theta_{di}=\frac{\pi}{2}+(\gamma+\beta), \quad \gamma\leq 0,
\ee 
where $\gamma$ is the angle measured from the heading angle of the robot to the straight line which connects the robot to the obstacle and $\beta$ is the angle which is made from the vector connecting the robot to its reference destination and the $x$ axis. Once agent $i$ escapes the obstacle, the tracking control laws are active again to track its reference trajectory. The control laws are the same to all the agents in the collective system. Hence, the group is able to pass obstacles in uncertain environments.
\begin{figure}[t]
    \centering
    \includegraphics[width=\columnwidth,height=2.3in,keepaspectratio]{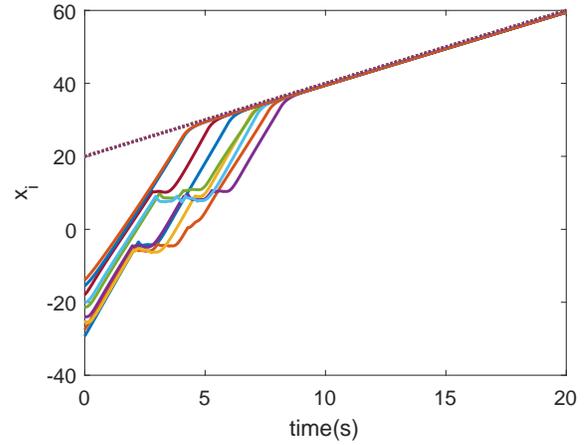}
    \centering
    \caption{Evolution of the x coordinates of agents.}
    \label{x}
\end{figure}

\begin{figure}[t]
    \centering
    \includegraphics[width=\columnwidth,height=2.3in,keepaspectratio]{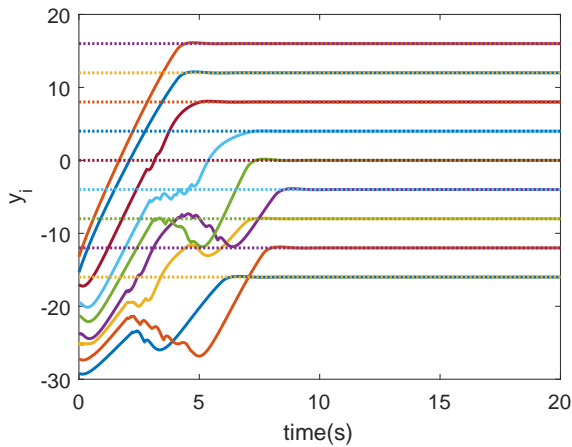}
    \centering
    \caption{Evolution of the y coordinates of agents.}
    \label{y}
\end{figure}

\begin{remark}
Similarly to \cite{mastellone2008}, we use an interactive potential function to avoid collisions among agents and obstacles. In this paper, the control laws (\ref{ui}) and (\ref{vi}) are employed when the collision region between agent $i$ and other agents or obstacle is not active. In contrast, when the collision region is detected, the control laws (\ref{uia}) and (\ref{via}) are in use. This is different from \cite{mastellone2008}, from which we use a similar approach for design and analysis of our control laws. Note that all proposed control laws in this paper are bounded.
\end{remark}

\section{Numerical example}\label{Example}
In this section, we run simulation for a multi-agent system of 9 mobile robots of the model (\ref{mobiledynamics}). The parameters for the potential function $V_{ij}$ in (\ref{Vij}) are $K_{ij}=3$, $a_i=1$, $b_i=2$, $c_i=4$. There are two obstacles whose parameters are $x_{obs}=[0;15]$, $y_{obs}=[-20;-5]$, and $r_{obs}=[3;4]$. The parameters of the control laws in (\ref{ui}) and (\ref{vi}) are
$K_{\theta i}=3$, $K_i=4$, and $D_{maxi}=3$.

\begin{figure}[t]
    \centering
    \includegraphics[width=\columnwidth,height=2.3in,keepaspectratio]{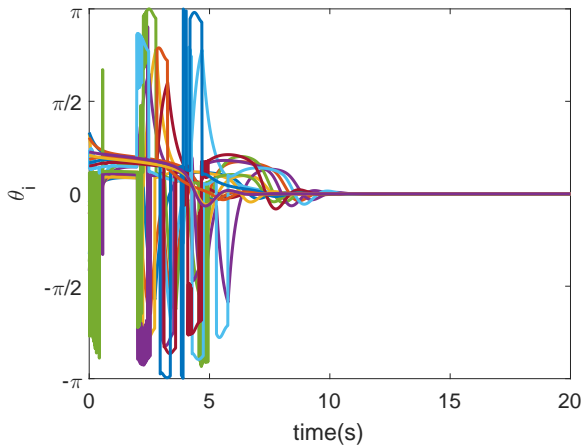}
    \centering
    \caption{Evolution of the heading angles.}
    \label{theta}
\end{figure}

\begin{figure}[t]
    \centering
    \includegraphics[width=\columnwidth,height=2.3in,keepaspectratio]{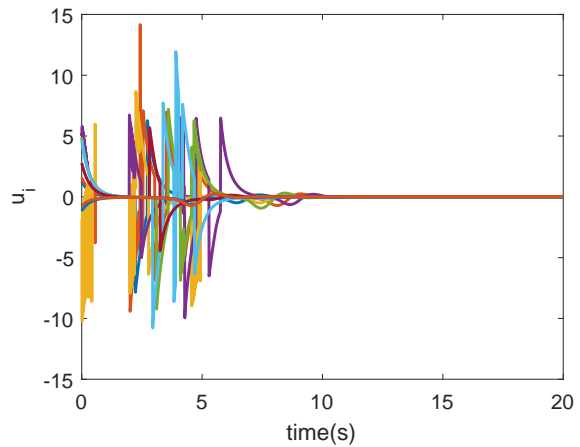}
    \centering
    \caption{Evolution of the angular speeds.}
    \label{u}
%    \vspace{-10pt}
\end{figure}

\begin{figure}[t]
    \centering
    \includegraphics[width=\columnwidth,height=2.3in,keepaspectratio]{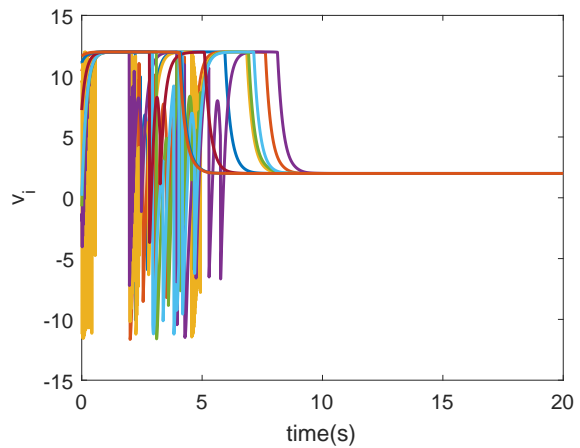}
    \centering
    \caption{Evolution of the linear speeds.}
    \label{v}
    \vspace{-10pt}
\end{figure}

\begin{figure}[t]
    \centering
    \includegraphics[width=\columnwidth,height=2.3in,keepaspectratio]{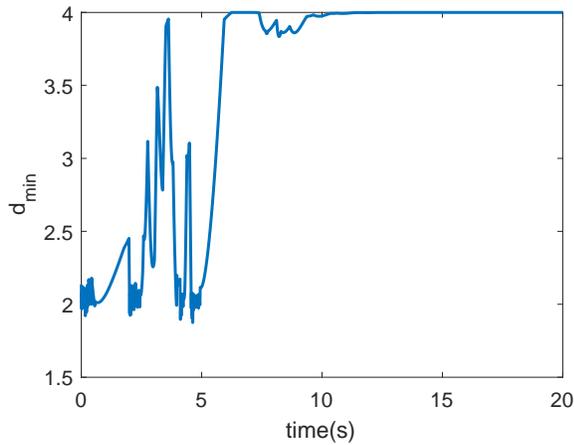}
    \centering
    \caption{Evolution of the minimum distance among agents.}
    \label{dmin}
    \vspace{-10pt}
\end{figure}

\begin{figure}[t]
\vspace{-5pt}
    \centering
    \includegraphics[width=\columnwidth,keepaspectratio]{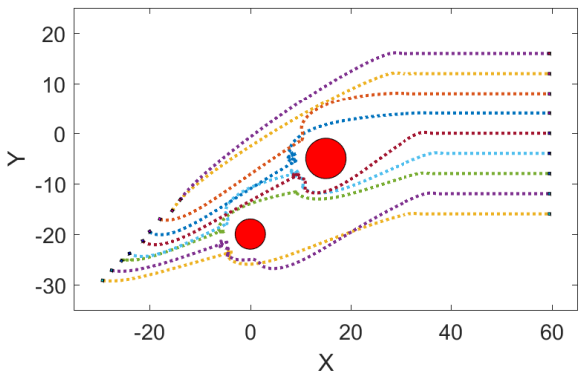}
    \centering
     \vspace{-20pt}
    \caption{Evolution of the agents.}
    \label{traj}

\end{figure}

The formation in this simulation is a line shape, in which the reference trajectory of agent $i$ is
\bea
\nn x_{di}&=&2\,t+20\\
\nn y_{di}&=&4\,i-20.
\eea

The other formations can be done similarly. 

Fig. \ref{x} exhibits the evolution of the x coordinates of the agents. Fig. \ref{y} shows the evolution of the y coordinates of the agents. It is seen that after 10 seconds, all agents converge to their reference trajectories. Note that during $t\in [3,5]s$ some agents avoid the two obstacles and pass them successfully. Fig. \ref{theta} presents the heading angles of the agents, which converge to the same value. The angular and transitional speeds are depicted in Figs. \ref{u} and \ref{v}, which demonstrate that all control inputs are bounded. The minimum distance among agents is depicted in Fig. \ref{dmin}, which shows that there is no collision among them. Finally, the evolution of the agents in the plane is illustrated in Fig. \ref{traj}, where it is clearly seen that all the agents avoid collision with the two obstacles and escape them successfully.

%%%%%%%%%%%%%%%%%%%%%%%%%%%%%%%%%%%%%%%%%%%%%%%%%%%%%%%%%%%%%%%%%%%%%%%%%%%%%%%%

\section{CONCLUSION}
\label{Conclusion}
This paper presents a control scheme for the problem of formation control of a nonholonomic mobile robot system. The proposed control inputs are practically bounded. Collision and obstacle avoidance is guaranteed with the help of bounded potential functions and rotational angles. Numerical simulation has been shown to demonstrate the effectiveness of the proposed approach.

\bibliographystyle{IEEEtran}
\bibliography{IEEEabrv,references}

\end{document}